\newtheorem*{theorem}{Theorem}
\newtheorem*{corollary}{Corollary}
\ificcvfinal\pagestyle{empty}\fi
\begin{document}

\title{Probabilistic Ranking-Aware Ensembles for Enhanced  Object Detections}


\author{Mingyuan Mao\textsuperscript{1}, 
 Baochang Zhang\textsuperscript{1,*}, David Doermann\textsuperscript{2}, Jie Guo\textsuperscript{1},\\ 
 Shumin Han\textsuperscript{3,*}, Yuan Feng\textsuperscript{3}, Xiaodi Wang\textsuperscript{3}, Errui Ding\textsuperscript{3} \\

\textsuperscript{1}Beihang University, Beijing, China\\
\textsuperscript{2}University at Buffalo, Buffalo, USA\\
\textsuperscript{3}Department of Computer Vision Technology (VIS), Baidu Inc\\
\textsuperscript{*}Corresponding author, email: bczhang@buaa.edu.cn, hanshumin@baidu.com}

\maketitle
\ificcvfinal\thispagestyle{empty}\fi

\begin{abstract}
   Model ensembles are becoming one of the most effective approaches for improving object detection performance already optimized for a single detector. Conventional  methods directly fuse bounding boxes but typically fail to consider proposal qualities when combining detectors. This leads to a new problem of \textbf{confidence discrepancy} for the detector ensembles.  The confidence has little effect on single detectors but significantly affects detector ensembles.  To address this issue, we propose a novel ensemble called the Probabilistic Ranking Aware Ensemble (PRAE) that refines the confidence of bounding boxes from detectors. By simultaneously considering the category and the location on the same validation set, we obtain a more reliable confidence based on  statistical probability.  We can then rank the detected bounding boxes for assembly.  We also introduce a bandit approach to address the confidence imbalance problem caused by the need to deal with different numbers of boxes at different confidence levels.  We use our PRAE-based non-maximum suppression (P-NMS) to replace the conventional NMS method in ensemble learning. Experiments on the PASCAL VOC and COCO2017 datasets demonstrate that our PRAE method consistently outperforms state-of-the-art methods by significant margins.
\end{abstract}

\section{Introduction}

Object detection is one of the hottest topics in the field of computer vision. We have recently witnessed significant success thanks, in part, to the unprecedented representation capacity of Convolutional Neural Networks (CNNs) ~\cite{iff,RCNN14,FastRCNN15,FasterRCNN15,YOLO16,Redmon_2019,FPN17,SSD16}. In a recent survey~\cite{Survey2019}, the authors categorize numerous approaches to CNN-based object detection as  one-stage~\cite{FasterRCNN15} vs. two-stage~\cite{FocalLoss17} architectures, single-scale ~\cite{FasterRCNN15} vs. pyramid feature networks~\cite{FPN17}, and anchor-based~\cite{SSD16} vs. anchor-free~\cite{NAS-FPN2019,DETR2020,ACT2020,smca2020} techniques. These approaches focus on model refinement for a single detector, but their performance is often stifled by their model's capacity for realistic tasks \cite{tiny-person2020}.

As demonstrated by numerous engineering attempts and visual object detection contests \cite{tiny-person2020}, single detector approaches still do not perform well enough for real applications.  It is our belief, however that detector ensemble approaches can push the boundaries of object detection performance to acceptable levels. Ensembles are also theoretically supported by the fact that when none of the models in an ensemble can detect objects accurately enough, there often exists a combination (ensemble) that can perform better than any individual model. Based on these observations, model ensemble strategies are widely used  to meet practical requirements of industrial applications. 


\begin{figure}[t]
	\centering
	\subfigure[]{\includegraphics[height=2.6cm]{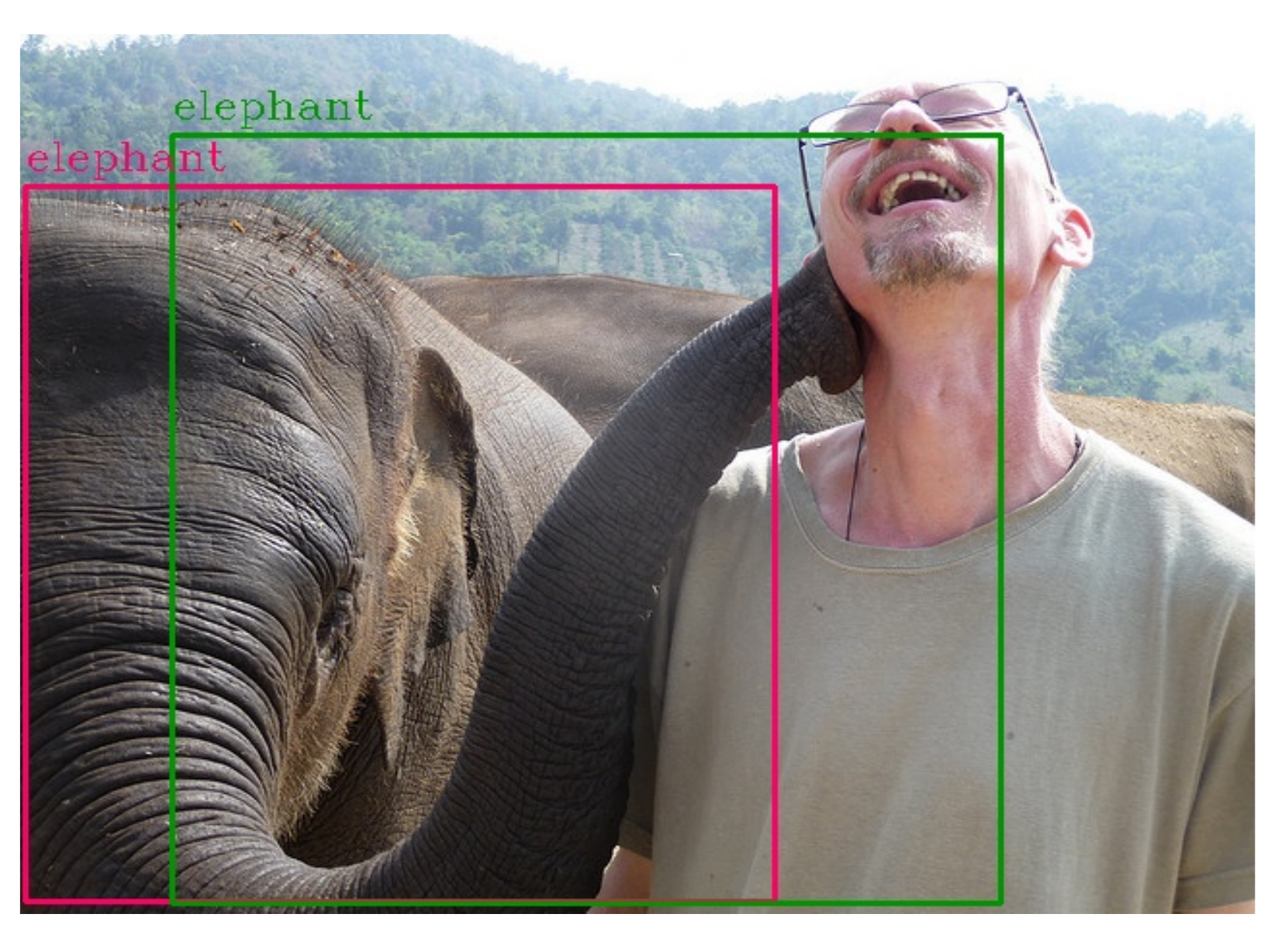}}
	\subfigure[]{\includegraphics[height=2.7cm]{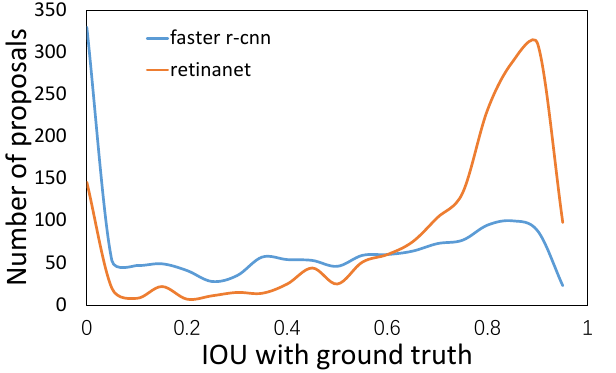}} \\
	\caption{Proposals from different detectors with the same confidence have vastly different localization qualities. a) Faster R-CNN~\cite{FasterRCNN15} (red) and Retinanet~\cite{FocalLoss17} (green) bounding boxes, each with a  confidence of 0.7; b) The number of proposals with confidence 0.7 as a function of the IOU with the ground truth on the COCO2017~\cite{Lin2014MicrosoftCC} validation set. (best viewed in color)}
	\label{intro}
\end{figure}
\begin{figure*}[h]
		\begin{center}
			\centerline{\includegraphics[scale=0.18]{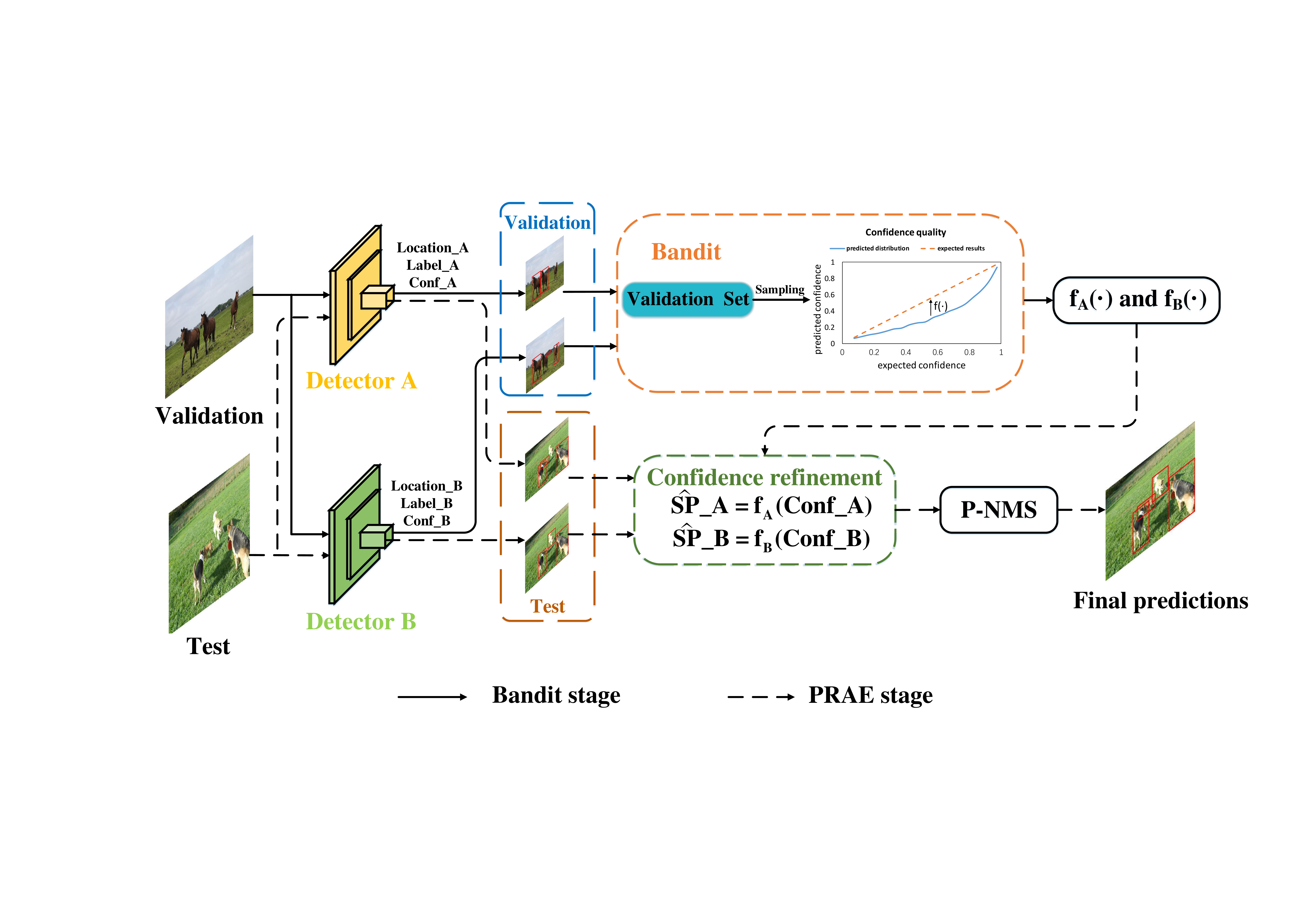}}
			\caption{The architecture of the PRAE ensemble. $f(\cdot)$ represents a mapping relationship between confidence score and statistical probability, $\hat{SP}$ denotes the refined confidence called statistical probability, $Conf$ denotes the original predicted confidence, and $P-NMS$ denotes the PRAE-based NMS. (best viewed in color)}
			\label{architecture}
		\end{center}
\end{figure*}

Results from object detection models typically include locations (bounding boxes), category labels, and associated confidences.
Prevailing ensemble methods in object detection are  based on a simple assembly of  bounding boxes predicted by the different models.  Methods typically use  Non-Maximum Suppression (NMS)~\cite{nms2006} or Soft-NMS~\cite{soft-nms2017} to remove significantly overlap boxes and update box confidences. Following a similar path, Non-maximum weighting (NMW)~\cite{nmw2017,nmw2017-2} uses the IOU value to weigh the boxes but does not change the confidence scores. Weighted
Boxes Fusion (WBF)~\cite{wbf2019} refines the bounding box coordinates with significant overlap.  They use a voting strategy to compute a weighted average of their locations and tune the weights assigned by different models. The method achieves a good performance but  suffers from a tedious parameter tuning process.  The weights for the ensemble are greedily traversed over all potential  combinations, and this number grows exponentially with the number of models.

Unfortunately, these  methods neglect the fact that   the proposals from different detectors with the same confidence may have vastly different localization qualities, as shown in Fig.~\ref{intro}.   This leads to a \emph{\textbf{confidence discrepancy problem}} (details are given in Section 3.1), which significantly affects the detector ensemble. A possible reason for this is that features extracted by the backbone are shared by the location regressor and classifier in the head~\cite{regcls2019}. Regression tasks tend to focus on the center point of an object, while classification tasks focus on salient parts~\cite{tian2019fcos}.    For example, the classification task  may consider the head of a person, which is not always at the center of the detection. However,   the confidences  are  mostly category-related. This explains why the proposal qualities are different even for the same confidence.  In other words, when the confidence is used to rank the bounding boxes in existing ensemble methods, the results are less than ideal.  While the category-related  confidence discrepancy is reduced, we can fairly rank the bounding boxes and eliminate this bottleneck for the ensemble. 


In this paper, a probabilistic ranking-aware ensemble (PRAE) method is introduced  to refine the confidence score by considering both category and localization information on the same dataset. We first investigate  the rationale of the detector ensemble, which shows  that  different detectors have different  proposal qualities.  This leads to the  confidence discrepancy that generally has little effect on a single detector but significantly affects ensemble detectors.  To address this issue, our PRAE refines the predicted confidence by randomly sampling bounding boxes on the same validation dataset, obtaining more reliable confidence called statistical probability. In addition, we  introduce the bandit approach (Upper Confidence Bound) to improve the confidence calculation and address the imbalanced confidence problem. Finally, we assemble the refined results and adjust the statistical probability of significantly overlapped bounding boxes using PRAE-based non-maximum suppression (P-NMS). We evaluate various detectors on the PASCAL VOC~\cite{voc2015} and COCO2017~\cite{Lin2014MicrosoftCC} datasets to demonstrate the general applicability and effectiveness of our PRAE strategy.  Without a tedious parameter tuning process, our algorithm  consistently improves the baselines by a large margin. The contributions of this paper are summarized as follows:
\begin{enumerate}
    \item  We are the first, to the best of our knowledge, to handle the confidence discrepancy problem caused by  proposal qualities of different detectors for ensemble models.
    \item  We present a novel and effective ensemble method called probabilistic ranking-aware ensemble (PRAE), which refines the confidence based on statistical probability by considering both category  and  localization  information on  the  same dataset.
    \item Without tedious parameter tuning, our PRAE method outperforms existing ensemble algorithms by a significant margin.
\end{enumerate}

\section{Related Work}
\textbf{Non-maximum suppression (NMS).} Predictions of a detector consist typically of locations (bounding boxes), category labels, and associated confidences. For human detection, Dalal and Triggs~\cite{humandetection} demonstrated a greedy NMS algorithm where the bounding box with the maximum detection score is selected, and its neighboring boxes
are suppressed using a pre-defined overlap threshold. Since then, NMS has been widely adopted in object detection to reduce the occurrence of false positives. The approach sorts all of the predicted bounding boxes according to their confidence and selects box with the maximum confidence. All other detection boxes with a significant overlap (using a pre-defined threshold) are suppressed. This process is recursively applied to the remaining boxes, and the selected boxes are the final predictions. However, a major issue with NMS is that it sets the score for neighboring detections to zero. Thus, if an object was actually present in that overlap threshold, it would be missed, and this would lead to a reduction in mAP.

\textbf{Soft-NMS.} Soft-NMS~\cite{soft-nms2017} is proposed to address the issue of missing close detections. Instead of directly suppressing the boxes that are significantly overlapped with the selected box by setting their score to zero, Soft-NMS decreases the detection scores as an increasing function of overlap.  Intuitively, if a bounding box has a significant overlap with a box having a higher score, it should take on the higher score, and if the overlap is low, the box can maintain its original score. Soft-NMS give those boxes that would be suppressed during NMS another chance to be selected.  Soft-NMS has resulted in noticeable improvements over traditional NMS on standard benchmark datasets, like PASCAL VOC~\cite{voc2015} and COCO2017~\cite{Lin2014MicrosoftCC}. However, neither NMS nor Soft-NMS refines the locations of bounding boxes. 


\textbf{Non-maximum weighting (NMW) and Weighted Boxes Fusion (WBF).} NMW~\cite{nmw2017,nmw2017-2} presents that lower confident bounding boxes may consider some latent information that is ignored by the most confident boxes. Especially in the case that two boxes have similar confidences, predicting the average box is more convincing than directly choosing the higher one. Thus, NMW weighs boxes according to their IOU with the box with the highest confidence, updates the locations by calculating the weighted average of their locations and then implements the NMS method without updating the box confidence. However, NMW does not use information about how many models predict a given box in a cluster and, therefore, does not produce the best results for model ensemble. By contrast, WBF~\cite{wbf2019} updates a fused box at each step, and uses it to check the overlap with the next predicted boxes. Specifically, it uses confidence scores of all proposed bounding boxes to construct an average box. When several boxes overlap significantly, WBF takes the average of their confidences as the final confidence and takes the weighted average of coordinates as the final location. The strategy improves the quality of the combined predicted bounding boxes and achieves a better mAP than other methods.

However, these methods only focus on the box fusion process and the direct weight assignment for different models but fail to deal with the confidence discrepancy problem before assembly. Besides, for NMW~\cite{nmw2017,nmw2017-2} and WBF~\cite{wbf2019},  weights for the ensemble are greedily traversed over all potential  combinations, and this number grows exponentially with the number of models. This leads to a tedious parameter tuning process.

\section{Probabilistic Ranking-Aware Ensembles (PRAE) }

In this section, we first describe the confidence discrepancy problem in existing detector ensemble methods. We then present our PRAE method, as shown in Fig.~\ref{architecture} and explain how it works. 

\subsection{Confidence Discrepancies in Detector Ensembles}

For object detection, the mAP  is the most commonly used evaluation metric. During the inference stage, the results include locations (bounding boxes), category labels, and associated confidences, and the mAP is acquired by sorting the bounding boxes in decreasing order of confidence, obtaining a precision-recall curve, and computing the area bounded by the curves and the axis. 

However, we find that sorting bounding boxes in terms of their confidence has two main drawbacks for ensembles.  First, the confidence is most often calculated with a normalization strategy using the Softmax function based on a Maximum Entropy Model. Thus, there is typically a deviation from the actual value. Second,  the confidence cannot precisely measure the classification and localization quality at the same time, and thus may not be the best sorting criteria for model ensembles.

To address these two issues, we  refine the  confidence of each bounding box to statistical probability (shortened to $SP$) using our PRAE method.  Statistical probability measures the possibility that this bounding box correctly matches a given target object. A correct match requires two conditions. First,  the Intersection over Union (IOU) of the bounding box and the target object must be  greater than a pre-defined threshold (usually 0.5).  Second, the category of the bounding box must be correct. We will show that sorting the $SP$ value instead of the confidence can achieve higher mAP after ensemble. The detailed calculation of $SP$ is given in \textbf{Section 3.2}.

The $SP$ value can be roughly estimated by sampling  the validation set. We choose several detectors with different mAPs on the COCO2017 validation set~\cite{Lin2014MicrosoftCC}. At each confidence level, we compute the ratio of the correct matches to  predicted bounding boxes. The results in Fig.~\ref{score}(a) shows the significant discrepancy between the confidence and the $SP$ values. More importantly, the discrepancies differ across  detectors. One possible  reason is that different detectors have different feature extraction capacities.  The proposal's inputs to the classifiers vary in quality, which leads to discrepancies between the confidence and the $SP$ values shown in Fig.~\ref{score}(a). In fact, such inconsistencies are one of the most dominant factors that deteriorate ensemble performance.

\begin{figure}[h]
	\centering
	\subfigure[]{\includegraphics[height=2.7cm]{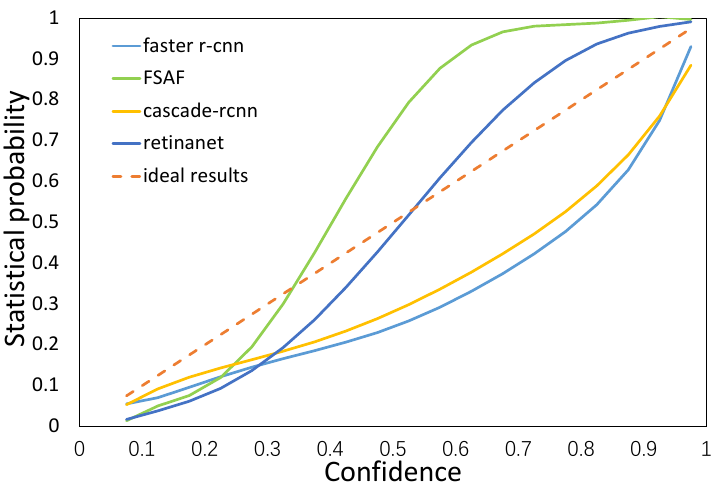}}
	\subfigure[]{\includegraphics[height=2.7cm]{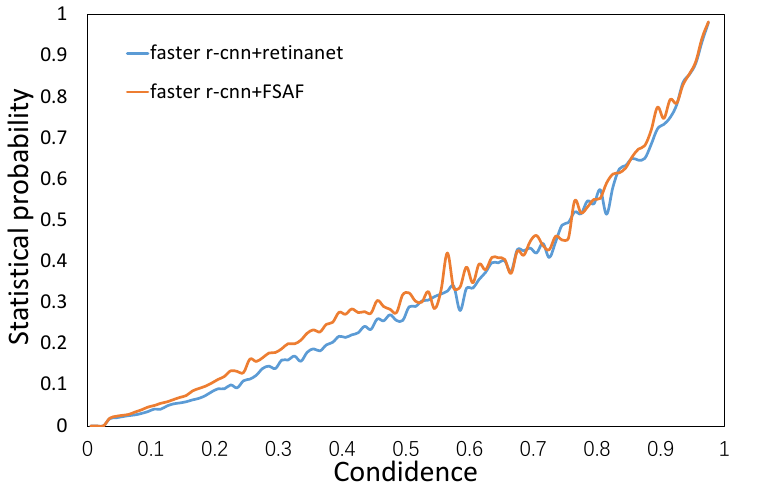}} \\
	\caption{(a) Discrepancies between the confidence and statistical probability of single detectors; (b) discrepancies between the confidence and statistical probability after ensemble. The ensemble method we use is WBF~\cite{wbf2019}.}
	\label{score}
\end{figure}

\begin{theorem}
    For any predictions (including locations, category labels, and associated confidences), if the ranks of confidence are consistent with ranks of the probability, the expectation of the mAP can be maximized.
\end{theorem}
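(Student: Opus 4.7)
The plan is to reduce the claim to a classical pairwise exchange argument on the ranked list of predictions, treating each correctness indicator as a Bernoulli variable whose mean is the corresponding statistical probability.

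I would first fix notation. For a given image, let there be $N$ predictions; for prediction $i$ let $p_i$ denote its statistical probability and let $Y_i \in \{0,1\}$ be the indicator that it is a correct match, so that $E[Y_i] = p_i$ by the definition of $SP$ in Section~3.1. Any scoring rule induces a permutation $\pi$ obtained by sorting scores in decreasing order. Writing $T$ for the (deterministic) total number of ground-truth objects, the non-interpolated average precision has the standard form
\begin{equation*}
AP(\pi) \;=\; \frac{1}{T}\sum_{k=1}^{N}\frac{Y_{\pi(k)}}{k}\sum_{j=1}^{k} Y_{\pi(j)},
\end{equation*}
so that $E[AP(\pi)]$ is a deterministic function of $\pi$ and $(p_i)$.

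The core of the argument is a single adjacent swap. Suppose positions $k$ and $k+1$ of $\pi$ are occupied by predictions with probabilities $p$ and $p'$. Letting $M_k = \sum_{j<k} Y_{\pi(j)}$, which is invariant under the swap, a short computation using $E[Y_i]=p_i$ and the (marginal) independence of the $Y_i$ should give the expected gain from placing the higher-probability prediction first as
\begin{equation*}
\Delta \;=\; (p'-p)\,\bigl(E[M_k]+1\bigr)\,\frac{1}{k(k+1)},
\end{equation*}
which is non-negative whenever $p' \ge p$. Consequently, every ranking containing a probability-inversion can be strictly improved by an adjacent swap, so the maximizers of $E[AP(\pi)]$ are exactly the rankings consistent with the order of $(p_i)$. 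Averaging this conclusion over classes and images, together with linearity of expectation, upgrades the statement from AP to mAP and proves the theorem.

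The main obstacle is that the AP actually used in PASCAL VOC and COCO is the interpolated precision sampled at finitely many recall levels, which does not factor as cleanly over positions as the formula above. I would handle this by proving the inequality for the raw AP first and then observing that the interpolated variants are coordinatewise non-decreasing functions of the per-position precisions $\tfrac{1}{k}\sum_{j \le k}Y_{\pi(j)}$, so an exchange whose expected effect is non-negative on every such precision cannot decrease the interpolated metric either. A secondary subtlety is that the $Y_i$ are not strictly independent, because the greedy matching of predictions to ground truths couples overlapping detections; this can be neutralized by conditioning on the ground-truth match pattern outside the two swapped positions, after which the local calculation above carries through unchanged. Ties in $p_i$ are harmless, since $\Delta=0$ whenever $p=p'$, so any consistent tie-breaking rule is equally optimal.
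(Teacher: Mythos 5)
Your proof is correct and rests on the same core mechanism as the paper's: exchange an inverted pair and show the expected AP cannot decrease. The execution differs in useful ways, though. The paper frames it as a proof by contradiction on one (not necessarily adjacent) inverted pair $(s_m,s_n)$, conditions on the four joint outcomes of those two boxes, argues $mAP_2>mAP_3$ because the list with the true positive placed earlier has one fewer false positive at each intermediate recall, and closes with a rearrangement-inequality step ($prob_2\,mAP_2+prob_3\,mAP_3 < prob_2\,mAP_3+prob_3\,mAP_2$). You instead work with adjacent transpositions and derive the closed-form expected gain $\Delta=(p'-p)(E[M_k]+1)/(k(k+1))$, which I verified is correct for the non-interpolated AP; this gives a cleaner bubble-sort argument and actually yields the stronger conclusion that consistent rankings are \emph{exactly} the maximizers. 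You also address two issues the paper silently skips: that the deployed metric is the interpolated, recall-sampled AP (your monotonicity reduction works, though to be airtight it needs the conditioning-on-$Y+Y'$ coupling rather than just ``non-negative expected effect on every precision,'' since expectation does not commute with a nonlinear monotone functional), and that the correctness indicators are coupled by the greedy matching, which your conditioning step handles. Net: same exchange idea, but your version is more explicit and more rigorous; the paper's four-case table is arguably more elementary and easier to read.
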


\begin{proof}
	\label{proof}
	The mAP is defined as the area bounded by the Precision-Recall (PR) curves and axis, which can be approximately calculated as 
    \begin{equation}
    	\begin{aligned}
    	mAP = \frac{1}{N}\sum_{n=1}^{N}Pr(R_{n}),
    	\end{aligned}
    	\label{PR-1}
    \end{equation}
    and
    \begin{equation}
    	\begin{aligned}
    	R_{n} = \frac{1}{N} \times n,
    	\end{aligned}
    	\label{PR-2}
    \end{equation}
    where $Pr(R_{n})$ represents the corresponding precision value when the  recall value equals $R_{n}$, and generally $N = 100$. Note that the final mAP is the mean  of the result with different IOU thresholds. To facilitate the analysis, Equ.~\ref{PR-1} is given based on a specific threshold. 
    
    Suppose a certain prediction including locations (bounding boxes), categories, and confidences  is denoted as
    \begin{equation}\nonumber
	\begin{aligned}
	S = \{s_{i}, i=1,2,\cdots, N\},
	\end{aligned}
	\end{equation}
	and
	\begin{equation}\nonumber
	\begin{aligned}
	s_{i} = (bbox_{i}, conf_{i}, cls_{i}).
	\end{aligned}
	\end{equation}
	 $S$ is sorted based on the confidence from the highest to  the lowest. 
	
	\textbf{A proof by contradiction.} We prove our theorem by contradiction and assume  that $S$ can achieve the highest  expectation of mAP, even when the ranks of confidence and probabilities are not consistent. The assumption means that there exists $m$ and $n$, satisfying $conf_{m} > conf_{n}$ and $prob_{m} < prob_{n}$. Then, we have four possible cases, as shown in Table~\ref{situations}.
	\begin{table}[]
    	\begin{center}
    		\begin{tabular}{cccc}
    			\Xhline{1pt}\noalign{\smallskip}
    			situation &$s_{m}$ &$s_{n}$ &prob\\
                \noalign{\smallskip}
                \Xhline{1pt}
                \noalign{\smallskip}
                1 &True &True &$P(s_{m})P(s_{n})$\\
                \hline
                2 &True &False &$P(s_{m})(1-P(s_{n}))$\\
                \hline
                3 &False &True &$(1-P(s_{m}))P(s_{n})$\\
                \hline
                4 &False &False &$(1-P(s_{m}))(1-P(s_{n}))$\\
                \Xhline{1pt}
    		\end{tabular}
    	\end{center}
    	\caption{Four cases  of two predicted boxes. $True$ denotes that the predicted box matches a target object, while $False$ denotes  a false detection.}
    	\label{situations}
    \end{table}
    Then we calculate the  expectation of the mAP as 
    \begin{equation}
    	\begin{aligned}
    	E(mAP) = \sum_{i=1}^{N=4}prob_{i} \times mAP_{i},
    	\end{aligned}
    	\label{expectation}
    \end{equation}
    where $prob_{i}$ and $mAP_{i}$ denote the probability and mAP under the $i^{th}$ situation. For the ranked box list $S$, the predicted boxes are sorted from the highest to the lowest in terms of their confidence, so $s_{m}$ ranks higher than $s_{n}$. Thus, we have
    \begin{equation}
    	\begin{aligned}
    	S_{1}=\{s_{1},s_{2},...,s_{m-1},s_{m}^{T},s_{m+1},...,s_{n-1},s_{n}^{F},...\},
    	\end{aligned}
    	\label{S1}
    \end{equation}
    where $s_{m}^{T}$ denotes the $m^{th}$ box is True, while $s_{n}^{F}$ denotes the $n^{th}$ box is False, which represents  Case 2 in Table~\ref{situations}. Similarly, we have
    \begin{equation}
    	\begin{aligned}
    	S_{2}=\{s_{1},s_{2},...,s_{m-1},s_{m}^{F},s_{m+1},...,s_{n-1},s_{n}^{T},...\},
    	\end{aligned}
    	\label{S2}
    \end{equation}
    which represents  Case 3 in Table~\ref{situations}. we define $S^{i}$ as
    \begin{equation}
    	\begin{aligned}
    	S^{i}=\{s_{1},s_{2},...,s_{i}\},
    	\end{aligned}
    	\label{Si}
    \end{equation}
    Obviously, When $i\le m-1$ or $i \ge n$, $S_{1}^{i}$ and $S_{2}^{i}$ have the same number of $True$ and $False$ boxes. However, when $m\le i\leq n$, as long as $S_{1}^{i}$ and $S_{2}^{i}$ have the same recall rate $R_{i}$, we have $P_{r1}(R_{i}) > P_{r2}(R_{i})$, because $S_{1}^{i}$ always have one less $False$ box than $S_{1}^{i}$. Thus, according to Equ.~\ref{PR-1}, $S_{1}$ achieves higher mAP than $S_{1}$ does, and we have $mAP_{2} > mAP_{3}$.
    
    Based on the assumption that $P(s_{m}) < P(s_{n})$ we have $prob_{2} < prob_{3}$ in Equ.~\ref{expectation}. if we appropriately adjust the confidence of $s_{m}$ and $s_{n}$, and exchange their positions  in $S$, and have 
    \begin{equation}
    	\begin{aligned}
    	\hat{S}=\{s_{1},s_{2},...,s_{m-1},s_{n},s_{m+1},...,s_{n-1},s_{m},...\},
    	\end{aligned}
    	\label{S'}
    \end{equation}
    and then the second and  third items in Equ.~\ref{expectation} are changed from
    \begin{equation}
    	\begin{aligned}
    	S:prob_{2} \times mAP_{2} + prob_{3} \times mAP_{3}
    	\end{aligned}
    	\label{tmp1}
    \end{equation}
    to 
    \begin{equation}
    	\begin{aligned}
    	\hat{S}:prob_{2} \times mAP_{3} + prob_{3} \times mAP_{2}.
    	\end{aligned}
    	\label{tmp1}
    \end{equation}
 {  Obviously, $\hat{S}$ achieves a higher mAP expectation than $S$.  The \textbf{theorem} is proved. } 
\end{proof}

\begin{corollary}
The bounding boxes' confidences become inconsistent with their probability  for  ensemble, leading  to a sub-optimal  detector. The reasons are highlighted in our \textbf{theorem}.  Only when the sorted bounding boxes maintain a probability  consistent with the confidence,  a higher mAP can be achieved.
\label{corollary}
\end{corollary}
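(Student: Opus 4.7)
The plan is to derive the corollary as a direct consequence of the theorem, supplemented by the empirical observation that ensembling heterogeneous detectors violates the consistency hypothesis. First I would recall what the theorem gives: the expected mAP is maximized only when the ranking induced by confidences coincides with the ranking induced by the true matching probabilities. A proof of the corollary then reduces to two steps --- establish that such an inconsistency genuinely arises in the ensemble regime, and then invoke the theorem to conclude sub-optimality.

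For the first step, I would appeal to Fig.~\ref{score}(a): each detector carries its own, detector-specific mapping from confidence to statistical probability, and these mappings disagree across detectors. When boxes from two different detectors are pooled and sorted by raw confidence, two boxes with identical nominal confidence inherit different $SP$ values, so there generically exist pairs $(s_m, s_n)$ with $\mathrm{conf}_m > \mathrm{conf}_n$ but $P(s_m) < P(s_n)$. Fig.~\ref{score}(b) further confirms empirically that, even after running an existing ensemble (WBF), the confidence and $SP$ curves remain misaligned. This is precisely the hypothesis required by the contradiction argument inside the theorem.

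For the second step, I would apply the theorem directly. Its proof shows that whenever such a misordered pair $(m,n)$ exists, swapping their positions strictly increases the expected mAP by
\begin{equation}
\bigl(prob_3 - prob_2\bigr)\bigl(mAP_2 - mAP_3\bigr) > 0,
\end{equation}
since $P(s_m) < P(s_n)$ forces the first factor to be positive and the recall--precision comparison inside the theorem forces the second to be positive. Consequently the ensemble output, which contains at least one such misordered pair, cannot attain the maximal expected mAP and is therefore sub-optimal. The converse ``only when'' direction is then the contrapositive of the theorem: if consistency fails anywhere in the ranking, at least one local swap is strictly profitable, so the maximum cannot already have been reached.

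The main obstacle I anticipate is bridging the empirical evidence for detector-specific calibration gaps to a \emph{formal} guarantee that the pooled ranking must contain an inversion. A fully rigorous write-up would either assume that the per-detector confidence-to-probability curves differ on a set of positive measure (a mild genericity assumption on heterogeneous ensembles), or explicitly cite Fig.~\ref{score} as motivating evidence and treat the corollary as an applied observation rather than a purely formal statement. I would lean toward the genericity assumption, since the corollary then follows logically from the theorem without any dependence on the particular detectors used in the experiment.
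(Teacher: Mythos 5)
Your proposal matches the paper's own justification of the corollary, which likewise combines the empirical evidence of Fig.~\ref{score}(a)--(b) (detector-specific confidence--probability gaps that persist after a conventional WBF ensemble) with a direct invocation of the theorem's swap argument to conclude sub-optimality. Your explicit swap-gain expression and your remark that one must either adopt a genericity assumption or treat the inversion's existence as an empirical premise are more careful than the paper, which simply asserts the corollary from the figures and the theorem without formalizing that bridge.
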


 As shown in Fig.~\ref{score}(a), there exist obvious confidence discrepancies between detectors. Thus, the confidence of bounding boxes predicted by different detectors has to be refined.  We test the conventional ensemble strategy to combine detectors, and the results ( in Fig.~\ref{score}(b))   show that the  confidence and statistical  probabilities are significantly inconsistent, which severely deteriorates the ensemble performance. Thus, according to the \textbf{theorem}, we introduce the corollary above as  a basic principle  to improve the performance of the ensemble detector.
 


\subsection{Probabilistic Ranking-aware  Confidence Refinement}
To bridge the gap between the confidence and the probability, we introduce  statistical probability  as a refined measure of the confidence of each box. We refine  the confidence of the bounding boxes based on a validation set shared by all detectors.  The result is a fair confidence measure based on statistical probability. The probability that a bounding box correctly matches a target object is a conditional probability, denoted as $P(bbox|conf)$.
To estimate $P(bbox|conf)$, we sample on the validation set. We first quantize the original confidence to discrete levels by dividing the confidence interval (0$\sim$1) into  sub-intervals with a length  $d$ and a centre point  defined as 
\begin{equation}
	\begin{aligned}
	c_{i} = d \times i - \frac{d}{2}, 1\le i \le ceil(\frac{1}{d}),
	\end{aligned}
	\label{quantization-1}
\end{equation}
and the sub-intervals are denoted as
\begin{equation}
	\begin{aligned}
	l_{i} = [c_{i} - \frac{d}{2},c_{i} + \frac{d}{2}).
	\end{aligned}
	\label{quantization-2}
\end{equation}
In each sub-interval $l_{i}$, we count the number of all bounding boxes $T_{i}$ and correct matches  $TP_{i}$.  We then compute the ratio of $TP_{i}$ to $T_{i}$ and obtain statistical probability as
\begin{equation}
	\begin{aligned}
	SP_{i} = \frac{TP_{i}}{T_{i}},
	\end{aligned}
	\label{quantization-3}
\end{equation}
which is a rough estimate of $P(bbox|conf)$ in sub-interval $l_{i}$. However, $T_{i}$ differs greatly across confidence levels, as shown in Fig.~\ref{curve}, leading to a confidence imbalance problem. Inspired by bandit~\cite{bandit1995}, the number of samples represents the exploitation level. The more samples in a sub-interval $l_{i}$, the more reliable  $SP_{i}$. Thus, we introduce Upper Confidence Bound (UCB) to refine $SP_{i}$, which guarantees that every sub-interval is fairly considered as follows: 
\begin{equation}
	\begin{aligned}
	SP_{i}^{\ast}  = SP_{i} + \theta \sqrt{\frac{2\ln{\sum_{i=1}^{N}T_{i}}}{T_{i}}}.
	\end{aligned}
	\label{ucb}
\end{equation}
where $SP_{i}^{\ast}$ denotes the refined statistical probability in $l_{i}$, $\theta$ is a tunable parameter, $N$ denotes the number of the sub-intervals. 

\begin{figure}[h]
	\centerline{\includegraphics[height=4.4cm]{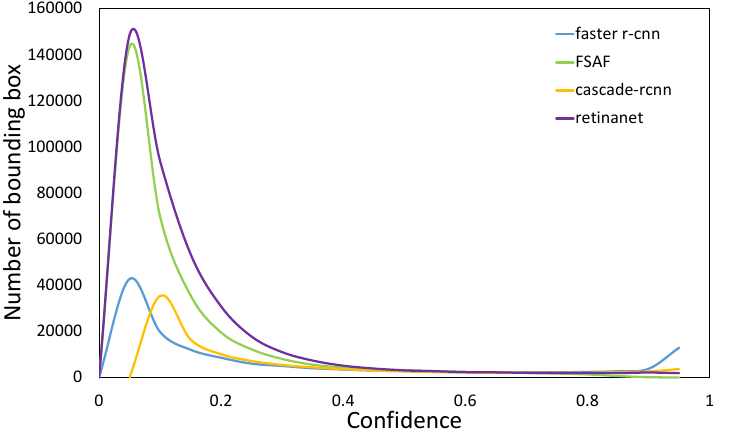}}
	\caption{Quantity distribution of bounding boxes across different confidence levels.}
	\label{curve}
\end{figure}

However, $SP^{\ast}$ is discrete, which results in quantization error. Obviously, in a certain sub-interval $l_{i}$, it is unfair to transform all confidence values to a single value $SP_{i}^{\ast}$, because according to Fig.~\ref{score}(a), the ranks of the confidence and probability are consistent.  This means that the bounding boxes with higher confidences are more likely to locate a target object. Thus, we propose a ranking-aware strategy to further refine the previous  $SP_{i}^{\ast}$ by simultaneously considering the confidence and ranks of bounding boxes. The final statistical probability of the bounding box (denoted as $\hat{SP}$) with the confidence is obtained by
\begin{equation}
	\begin{aligned}
	\hat{SP}_{i} &= f(confidence)\\
	                  &= SP^{\ast}_{i} \times \frac{confidence}{rk(confidence)},
	\label{quantization-5}
	\end{aligned}
\end{equation}
and
\begin{equation}
	\begin{aligned}
	rk(confidence) = 
	\left\{
	\begin{array}{lr}
	1, & conf \in l_{1},\\
	2, & conf \in l_{2},\\
	\vdots\\
	N, & conf \in l_{N}.
	\end{array}
	\right.
	\end{aligned}
	\label{quantization-2.5}
\end{equation}
where $rk(confidence)$ denotes the rank of the confidence. After probabilistic ranking-aware  confidence refinement,  the ranks of $\hat{SP}$ and statistical probability are consistent in the  ensemble. This  helps to improve the ensemble performance, as suggested by the \textbf{theorem in Section 3.1}. Statistical probability achieves confidence refinement globally and guarantees the consistency of confidence and ranks locally, resulting in a more rational indicator for sorting the bounding boxes during evaluation.

\subsection{PRAE-based NMS (P-NMS)}
The overlapping bounding boxes from different detectors  are generally handled by an NMS process that directly selects the box with the highest confidence and removes others. However, considering the statistical probability values have reduced the confidence discrepancies between  detectors,  all overlapping boxes can provide equally important information and should be involved in the calculation.  To handle the issue, we introduce a PRAE-based NMS   to  effectively fuse these overlapping  bounding boxes. To do this, when the IOU of several boxes  is greater than a threshold, we take the average of their $\hat{SP}$ values.  Then, the location is updated by 
\begin{equation}
	\begin{aligned}
	loc = \sum_{k=1}^{n}\hat{SP}_{k} \times loc_{k},
	\label{P-NMS}
	\end{aligned}
\end{equation}
where $n$ denotes the number of overlapping boxes, $\hat{SP}$ and $loc_{k}$ denote the refined statistical probability and the location of the $k^{th}$ bounding box. Note that $loc$ is a 4-dimensional vector, which contains the co-ordinates of top-left and bottom-right points of the bounding box.

The validation and test set are assumed to be independent identically distributed.  As a result, $\hat{SP}$  obtained  on the validation set are also effective on the test set, as verified in \textbf{Section 4}. Our PRAE approach is summarized in Algorithm \ref{alg}.
\begin{algorithm}[tb]
	\caption{The PRAE method}
	\label{alg}
	\begin{algorithmic}
	    \STATE $\#$ Note: conf is short for confidence
		\STATE {\bfseries Input:}
		$val$: validation set.\\
		\qquad \quad$S_{A}^{v}$: Boxes predicted on $val$ by detector A.\\
		\qquad \quad $S_{B}^{v}$: Boxes predicted on $val$ by detector B.\\
		\qquad \quad $S_{A}^{t}$: Boxes predicted on the test set by detector A.\\
		\qquad \quad $S_{B}^{t}$: Boxes predicted by the test set by detector B.\\
		\STATE {\bfseries Output:} $S_{f}$ : Final predictions on the test set.
		\STATE $S_{f} \gets \phi$ 
		\FOR{$S$ in $(S_{A}^{v},S_{B}^{v})$}
		\STATE $c_{i},l_{i}, rk(conf) \gets $ Quantize the confidence interval.
		\STATE $SP_{i} \gets $ Sample on the $val$.
		\STATE $SP_{i}^{\ast} \gets$ Implement bandit.
		\STATE $S^{t} \gets S_{A}^{t}$ If $S$ is $S_{A}^{v}$ Else $S_{B}^{t}$
		    \FOR{$s$ in $S^{t}$}
		    \STATE $rk(s) \gets$ based on Equ.~\ref{quantization-2.5}.
		    \STATE $\hat{SP}(s) \gets$ based on Equ.~\ref{quantization-5}
		    \STATE $S_{f} \gets S_{f} \cup \hat{SP}(s)$
		    \ENDFOR
		\ENDFOR
		\STATE $S_{f} \gets P-NMS(S_{f})$
		\STATE Return $S_{f}$
	\end{algorithmic}
\end{algorithm}

\section{Experiments}
In this section, we implement our PRAE method to evaluate the effectiveness of the proposed statistical probability. We also compare our method with other state-of-the-art approaches. The performance of the PRAE method is evaluated on various detectors using the PASCAL VOC~\cite{voc2015} and COCO2017~\cite{Lin2014MicrosoftCC} datasets. 

\subsection{Implementation details}
To fairly compare with other ensemble methods, we evaluate our PRAE with detectors achieving different mAP levels. For COCO dataset, all selected models are trained on the COCO2017 training set (118k images)and optimized on the validation set (5k images), while for PASCAL VOC dataset, models are trained on VOC07+12 training set (14k images) and optimized on the VOC2007 validation set (2k images).


\begin{table}[h]
    \begin{center}
        \begin{tabular}{p{1cm}<{\centering} p{1.1cm}<{\centering}p{1.1cm}<{\centering}p{1.1cm}<{\centering}p{1.1cm}<{\centering}p{1.1cm}}
			\Xhline{1pt}
			$d$ &0.01 &0.03 &0.05 &0.07\\
			mAP  &44.4 &45.9 &46.1 &46.0\\
			\hline
			$\theta$ & 0   & 0.5    & 1  &1.5 \\
			mAP  &46.1 &46.5 &46.7 &46.5\\
			\Xhline{1pt}
		\end{tabular}
    \end{center}
    \caption{Comparison of different $d$ in Equ.~\ref{quantization-1} and $\theta$ in Equ.~\ref{ucb}. $d$ is set to 0.05 when $\theta$ varies. mAP denote mAP@(0.5..0.95) on the COCO validation set.}
    \label{ablation-1}
\end{table}

\begin{table*}[]
    \begin{center}
        \begin{tabular}{lccc}
            \Xhline{1pt}
            Model  &\makecell[c]{Retinanet~\cite{FocalLoss17} (mAP 37.5) \\ \& \\Faster R-CNN~\cite{FasterRCNN15} (mAP 39.2)} &\makecell[c]{FCOS~\cite{tian2019fcos} (mAP 42.5)\\ \& \\Mask R-CNN~\cite{MaskRCNN17} (mAP 43.6)}
            &\makecell[c]{Cascade Mask~\cite{cascade18} (mAP 50.3)\\ \& \\CascadeClsAware (mAP 51.7)}\\
            \Xhline{1pt}
            NMS~\cite{nms2006} &39.1 & 44.6 & 52.6 \\
            IOU threshold & 0.7 & 0.65 &0.7\\
            weight & [3, 4] & [2, 3] & [1, 2]\\
            \hline
            Soft-NMS~\cite{soft-nms2017} & 39.1 & 44.7 & 52.3 \\
            IOU threshold & 0.7 & 0.7 &0.7\\
            sigma & 0.1 & 0.1 & 0.1\\
            weight & [1, 4] & [4, 5] & [3, 5]\\
            \hline
            NMW~\cite{nmw2017,nmw2017-2} & 39.8 & 45.7 & 52.8 \\
            IOU threshold & 0.75 & 0.7 &0.7\\
            weight & [2, 3] & [3, 4] & [1, 2]\\
            \hline
            WBF~\cite{wbf2019} & 39.0 & 45.2 & 52.8 \\
            IOU threshold & 0.7 & 0.7 &0.7\\
            weight & [1, 4] & [2, 3] & [1, 2]\\
            \hline
            \textbf{PRAE (ours)} &\textbf{40.8} &\textbf{46.7} &\textbf{53.4}\\
            IOU threshold & 0.7 & 0.7 &0.7\\
            weight & [1, 1] & [1, 1] & [1, 1] \\
            \Xhline{1pt}
        \end{tabular}
    \end{center}
    \caption{Ensemble results of two models on the COCO2017 validation set.} 
    \label{2-model}
\end{table*}

\subsection{Ablation study}
We conduct ablation experiments to evaluate the effect of the parameters.
The tunable parameters of the PRAE method are $d$ in Equ.~\ref{quantization-1} and $\theta$ in Equ.~\ref{ucb}, which drives the effect of the bandit. We choose FCOS (ResNext101-FPN)~\cite{tian2019fcos} with 42.5\% mAP and Mask R-CNN (Res2Net101-FPN~\cite{zhu2019feature} with 43.6\% mAP to ensemble. We vary $d$ and $\theta$ separately, and summarize the results evaluated on the COCO2017 validation set in Table~\ref{ablation-1}. To confirm the effect of different parts of the PRAE strategy, we conduct another ablation study, as shown in Table~\ref{ablation-2}.
\begin{table}[h]
    \begin{center}
        \begin{tabular}{ccccc}
            NMS & PRAE &Bandit &P-NMS &\makecell[c]{mAP@(0.5..0.95) \\ (validation)}\\
            \Xhline{1pt}
            $\checkmark$  & &   & &44.9  \\
			$\checkmark$ & $\checkmark$ & & &45.5\\
			$\checkmark$ & $\checkmark$ &$\checkmark$ & &45.9\\
			& $\checkmark$ & $\checkmark$ &$\checkmark$ &46.7
        \end{tabular}
    \end{center}
    \caption{The effects of different parts of the PRAE. The IOU thresholds for NMS and P-NMS are set to 0.7, $d$ is set to 0.05 and $\theta$ for bandit is set to 1.}
    \label{ablation-2}
\end{table}

In Table~\ref{ablation-1}, we find that bandit with an appropriate $\theta$ value plays a positive role in the ensemble.  This demonstrates its ability to address the confidence imbalance problem. $\theta$ is set to 1 in the following experiments. In Table~\ref{ablation-2}, the sorting of the predicted bounding boxes by $\hat{SP}$ instead of confidence helps to achieve a 0.8\% improvement in terms of mAP.  This strongly suggests that the proposed statistical probability is a much better indicator of the probability that a bounding box correctly matches a target object  than the confidence. Unlike existing methods, P-NMS does not assign tunable weights to different detectors.  But it still helps to achieve considerable improvement.  This improvement  further verifies that PRAE eliminates the confidence discrepancy of detectors, making them equally reliable and thus avoiding the parameter tuning process. 

\subsection{An Ensemble of two  models}
Existing ensemble methods need to weigh different models before assembling them. The ensemble performance is very sensitive to the pre-defined weights. To eliminate the effects of these weights  as much as possible, we implement an ensemble of only two models to fairly compare our PRAE method. We test our PRAE method on the PASCAL VOC2007~\cite{voc2015} test set and the COCO2017~\cite{Lin2014MicrosoftCC} validation set. 

For the VOC dataset, we choose Faster R-CNN~\cite{FasterRCNN15} (ResNet50-FPN as a backbone and 1x training schedule) and Retinanet ~\cite{FocalLoss17} (ResNet50-FPN as a backbone and 1x training schedule) for ensemble. The results are shown in Table~\ref{2-model-voc}.
\begin{table}[]
    \begin{center}
        \begin{tabular}{lccc}
            \Xhline{1pt}
            Model  & Backbone & $AP_{50} (test)$\\
            \Xhline{1pt}
            Retinanet ~\cite{FocalLoss17} & ResNet50-FPN &69.8 \\
            Faster R-CNN~\cite{FasterRCNN15} & ResNet50-FPN &71.3 \\
            \hline
            Method &Weight &$AP_{50} (test)$\\
            \hline
            NMS~\cite{nms2006}  & [1, 2] & 72\\
            Soft-NMS~\cite{soft-nms2017} & [2, 3] &72\\
            NMW~\cite{nmw2017, nmw2017-2} & [1, 2] &72.1\\
            WBF~\cite{wbf2019} &[1, 4] & 71.1\\
            \textbf{PRAE (ours)} &[1, 1] & \textbf{73.3} \\
            \Xhline{1pt}
        \end{tabular}
    \end{center}
    \caption{Ensemble results of two models on the Pascal VOC2007 test set.}
    \label{2-model-voc}
\end{table}

\begin{table*}[h]
    \begin{center}
        \begin{tabular}{lcccccccc}
            \Xhline{1pt}
            Model & Backbone & Lr schd & $AP$    & $AP_{50}$    & $AP_{75}$    & $AP_S$    & $AP_M$    & $AP_L$\\
            \Xhline{1pt}
            FCOS~\cite{tian2019fcos} & ResNext-101-FPN & 2x & 43.1 & 62.8 & 46.3 & 26.1 & 45.7 & 53.4 \\

            Mask R-CNN & Res2Net101-FPN & 2x & 45.3 & 65.1 & 50.0 & 26.8 & 48.7 & 57.0 \\

            HTC~\cite{htc2019} & Res2Net101-FPN & 20e & 47.9 & 67.2 & 52.3 & 28.2 & 51.0 & 60.6 \\

            Cascade Mask & SENet154-vd-FPN & 1.44x & 50.7 & 69.6 & 55.2 & 31.1 &53.4 & 65.0\\

            CascadeClsAware Faster & ResNet200-vd-FPN & 2.5x & 52.1  & 70.0  & 57.4 & 34.2 & 54.4 & 64.0 \\

            Cascade Faster & CBResNet200-vd-FPN & 2.5x & 53.8 & 71.8 &59.4 & 35.7 & 56.4 & 66.4\\
            \Xhline{1pt}
            Method &Weight  &IOU threshold & $AP$    & $AP_{50}$    & $AP_{75}$    & $AP_S$    & $AP_M$    & $AP_L$\\
            \Xhline{1pt}
            \multirow{2}*{NMS~\cite{nms2006}} & [1, 3, 4, 7, 8, 9], & 0.65 & 54.3 & 72.7 & 60.1 & 36 & 57.1 & 67.2 \\
            & \textcolor{red}{[1, 1, 1, 1, 1, 1]} & \textcolor{red}{0.65} & \textcolor{red}{51.5 } & \textcolor{red}{70.7} & \textcolor{red}{57.4} & \textcolor{red}{33.8} & \textcolor{red}{54.8} & \textcolor{red}{64.1}\\
            \hline
            
            \multirow{2}*{Soft-NMS~\cite{soft-nms2017}} & [1, 3, 4, 7, 8, 9] & 0.65 & 53.9 & 72.6 & 58.9 & 35.5 & 56.4 & 66.6 \\ 
            & \textcolor{red}{[1, 1, 1, 1, 1, 1]} & \textcolor{red}{0.65} & \textcolor{red}{50.8 } & \textcolor{red}{70.7} & \textcolor{red}{56} & \textcolor{red}{33.2} & \textcolor{red}{53.9} & \textcolor{red}{63.4}\\
            \hline
            
            \multirow{2}*{NMW~\cite{nmw2017,nmw2017-2}} & [2, 3, 5, 7, 8, 9] & 0.7 & 54.6 & 73.0 & 61.3 & 36.9 & 57.9 & 67.1 \\
            & \textcolor{red}{[1, 1, 1, 1, 1, 1]} & \textcolor{red}{0.65} & \textcolor{red}{53.4 } & \textcolor{red}{71.3} & \textcolor{red}{59.6} & \textcolor{red}{35.4} & \textcolor{red}{56.9} & \textcolor{red}{66.1}\\
            \hline
            
            \hline
            \multirow{2}*{WBF~\cite{wbf2019}} & [2, 4, 5, 7, 8, 9] & 0.7 & 54.7 & 73.1 & 61.5 & 37.4 & 57.8 & 67.0\\
            & \textcolor{red}{[1, 1, 1, 1, 1, 1]} & \textcolor{red}{0.7} & \textcolor{red}{52.7} & \textcolor{red}{72.4} & \textcolor{red}{58.9} & \textcolor{red}{36.0} & \textcolor{red}{55.9} & \textcolor{red}{64.1}\\
            \hline
            \textbf{PRAE (ours)} & [1, 1, 1, 1, 1, 1] & 0.7 & \textbf{56.0} & \textbf{73.8} & \textbf{61.9} & \textbf{37.6} & \textbf{58.6} & \textbf{68.3}\\
            \Xhline{1pt}
        \end{tabular}
    \end{center}
    \caption{Ensemble results of multiple models of COCO2017 test-dev set. Red parts show the performance of existing methods without weight tuning.} 
    \label{6-model}
\end{table*}

For the COCO dataset, we select three groups of detectors with low, medium, and  high  mAP scores. In the low-mAP group, we choose Retinanet~\cite{FocalLoss17} with 37.5 mAP (ResNet50-FPN as a backbone and 2x training schedule), and Faster R-CNN~\cite{FasterRCNN15} with 39.2 mAP (ResNet50-FPN as a backbone and 2x training schedule).  In the medium-mAP group, we choose FCOS~\cite{tian2019fcos} with 42.5 mAP (ResNext-101-FPN as a backbone and 2x training schedule) and Mask R-CNN~\cite{MaskRCNN17} (Res2Net101-FPN as a backbone and 2x training schedule) with 43.6 mAP.  In the high-mAP group, we choose Cascade Mask (SENet154-vd-FPN as a backbone and 1.44x training schedule) with 50.3 mAP, and CascadeClsAware (ResNet200-vd-FPN-Nonlocal as a backbone and 2.5x training schedule) with 51.7 mAP. The results are shown in Table~\ref{2-model}. 

The results indicate that without tedious parameter tuning, the proposed PRAE method consistently outperforms existing methods by significant margins. Furthermore, the PRAE is effective across different mAP levels and different datasets, which proves its general applicability and effectiveness.

\subsection{An ensemble of multiple models}
We also combine the results of multiple models to verify the efficiency of our method. Specifically, we choose FCOS~\cite{tian2019fcos}, Mask R-CNN~\cite{MaskRCNN17}, and several improved versions of Cascade R-CNN~\cite{cascade18}. These models include one-stage and two-stage detectors with different mAP level, which makes the comparison more convincing. The parameters are optimized on the COCO2017~\cite{Lin2014MicrosoftCC} validation set, and methods are evaluated on the test-dev set. The results in Table~\ref{6-model} show that the performance of existing methods is very sensitive to the weight assignment. When assigning equal weight to every model, the ensemble performance of existing methods is severely deteriorated, as the confidence discrepancies of different models are inconsistent. Thus parameter tuning is indispensable for them. However, the combination of weights tends to grow exponentially with the number of models, which becomes unacceptable. By contrast, by bridging the gap between the confidence and the probability, without tedious parameter tuning, PRAE methods still outperform all baselines and advance the state-of-the-art for ensemble methods significantly. The per-class performance is provided in the supplementary materials.

We also compare our PRAE with other methods when there is a huge domain gap between the validation set and test set, which shows that the PRAE still outperforms other methods (provided in supplementary materials).

\section{Conclusion}
We have proposed an elegant and effective approach, referred to as the Probabilistic Ranking Aware Ensembles (PRAE) method, for visual object detection. The proposed statistical probability refine the predicted confidence by randomly sampling bounding boxes on the same validation dataset, obtaining more reliable confidence , and thus achieve a more rational indicator to measure the qualities of bounding boxes. With PRAE implemented, we have significantly improved the performance of the model ensemble, in  contrast to the baselines and the state-of-the-art methods. The underlying effect is that PRAE bridges the gap between the confidence and the probability, and thus addressing the confidence discrepancy issue of detectors, which leads to improved ensemble results. Our PRAE method provides fresh insight into model ensembles for object detection.

\newpage

{\small
\bibliographystyle{ieee_fullname}
\bibliography{egpaper_final}
}

\end{document}